\pgfplotsset{compat=newest}
\definecolor{cadetblue}{rgb}{0.37, 0.62, 0.63}
\definecolor{burntorange}{rgb}{0.8, 0.33, 0.0}
\definecolor{americanrose}{rgb}{1.0, 0.01, 0.24}
\definecolor{applegreen}{rgb}{0.55, 0.71, 0.0}
\definecolor{darkmagenta}{rgb}{0.55, 0.0, 0.55}
\definecolor{peru}{rgb}{0.80, 0.52, 0.25}
\definecolor{navy}{rgb}{0.0, 0.0, 0.5}
\definecolor{maroon}{rgb}{0.5, 0.0, 0.0}
\definecolor{gold}{rgb}{1.0, 0.84, 0.0}
\definecolor{crimson}{rgb}{0.86, 0.08, 0.24}
\newcommand{\algmargin}{\the\ALG@thistlm}
\newlength{\whilewidth}
\algnewcommand{\parState}[1]{\State\parbox[t]{\dimexpr\linewidth-\algmargin}{\strut #1\strut}}
\newcommand{\IntegerP}{\mathbb{N}}
\newcommand{\IntegerPP}{\mathbb{N}_*}
\newcommand{\Real}{\mathbb{R}}
\newcommand\given{{\mathbin{}\mid\mathbin{}}}
\newcommand\vect[1]{\mathbf{#1}}
\newcommand\vectgr[1]{\boldsymbol{#1}}
\providecommand\given{} 
\newcommand\SetSymbol[1][]{
  \nonscript\,#1\vert \allowbreak \nonscript\,\mathopen{}}
\DeclarePairedDelimiterX\Set[1]{\lbrace}{\rbrace}%
{ \renewcommand\given{\SetSymbol[\delimsize]} #1 }
\DeclarePairedDelimiterX\innerp[2]{\langle}{\rangle}{#1
  \mathop{}\delimsize\vert\mathop{} #2}
\DeclarePairedDelimiterX\norm[1]\lVert\rVert{\ifblank{#1}{\:\cdot\:}{#1}}
\DeclareMathOperator{\Fix}{Fix}
\declaretheoremstyle[%
headfont=\normalfont\bfseries,
notefont=\mdseries,
notebraces={(}{)},
bodyfont=\normalfont,
postheadspace=1ex
]{mystyle}
\declaretheorem[style=mystyle,
                name=Theorem,
                refname={theorem,theorems},
                Refname={Theorem,Theorems}
]{thm}
\declaretheorem[style=mystyle,
                name=Proposition,
                refname={proposition,propositions},
                Refname={proposition,propositions}
]{prop}
\newlist{thmlist}{enumerate}{1}
\setlist[thmlist]{label=\textbf{(\roman{*})}, ref=\thethm(\roman{*}), noitemsep}
\newlist{lemlist}{enumerate}{1}
\setlist[lemlist]{label=\textbf{(\roman{*})}, ref=\thelemma(\roman{*}), noitemsep}
\newlist{exlist}{enumerate}{1}
\setlist[exlist]{label=\textbf{(\roman{*})}, ref=\theexample(\roman{*}), noitemsep}
\newlist{factlist}{enumerate}{1}
\setlist[factlist]{label=\textbf{(\roman{*})}, ref=\thefact(\roman{*}), noitemsep}
\newlist{proplist}{enumerate}{1}
\setlist[proplist]{label=\textbf{(\roman{*})}, ref=\theprop(\roman{*}), noitemsep}
\newlist{asslist}{enumerate}{1}
\setlist[asslist]{label=\textbf{(\roman{*})},
  ref=\theassumption(\roman{*}), noitemsep}
\newlist{deflist}{enumerate}{1}
\setlist[deflist]{label=\textbf{(\roman{*})}, ref=\thedefinition(\roman{*}), noitemsep}
\newlist{algolist}{enumerate}{1}
\setlist[algolist]{label=\textbf{(\roman{*})}, ref=\thealgo(\roman{*}), noitemsep}
\newlist{claimlist}{enumerate}{1}     
\setlist[claimlist]{label=\textbf{(\roman{*})}, ref=\theclaim(\roman{*}), noitemsep}
\newlist{applist}{enumerate}{1}
\setlist[applist]{label=\textbf{(\roman{*})}, ref=\thesection(\roman{*}), noitemsep}
\newlist{MyEnumSec}{enumerate}{1}
\setlist[MyEnumSec]{label=\textbf{\thesection(\roman{*})},
  ref=Item~\thesection(\roman{*}), noitemsep}
\newlist{MyEnumSubSec}{enumerate}{1}
\setlist[MyEnumSubSec]{label=\textbf{\thesubsection(\roman{*})},
  ref=Item~\thesubsection(\roman{*}), noitemsep, wide = 0pt, leftmargin = *}
\crefname{thm}{Theorem}{Theorems}
\crefname{prop}{Proposition}{Propositions}
\crefname{assumption}{Assumption}{Assumptions}
\crefname{lemma}{Lemma}{Lemmata}
\crefname{definition}{Definition}{Definitions}
\crefname{example}{Example}{Examples}
\crefname{algo}{Algorithm}{Algorithms}
\crefname{fact}{Fact}{Facts}
\crefname{claim}{Claim}{Claims}
\crefname{appendix}{Appendix}{Appendices}
\crefname{coroll}{Corollary}{Corollaries}
\crefname{figure}{Figure}{Figures}
\crefname{section}{Section}{Sections}
\crefname{thmlisti}{Theorem}{Theorems}
\crefname{lemlisti}{Lemma}{Lemmata}
\crefname{proplisti}{Proposition}{Propositions}
\crefname{asslisti}{Assumption}{Assumptions}
\crefname{deflisti}{Definition}{Definitions}
\crefname{exlisti}{Example}{Examples}
\crefname{algolisti}{Algorithm}{Algorithms}
\crefname{factlisti}{Fact}{Facts}
\crefname{claimlisti}{Claim}{Claims}
\crefname{applisti}{Appendix}{Appendices}
\crefname{MyEnumSeci}{}{}
\crefname{MyEnumSubSeci}{}{}
\newcommand*{\ie}{%
  \@ifnextchar{,}%
  {\textit{i.e.}}%
  {\textit{i.e.,}\@\xspace}%
}
\newcommand*{\eg}{%
  \@ifnextchar{,}%
  {\textit{e.g.}}%
  {\textit{e.g.,}\@\xspace}%
}
\newcommand*{\etc}{%
  \@ifnextchar{.}%
  {\textit{etc}}%
  {\textit{etc.}\@\xspace}%
}
\newcommand*{\etal}{%
  \@ifnextchar{.}%
  {\textit{et al}}%
  {\textit{et al.}\@\xspace}%
}
\newcommand*{\cf}{%
  \@ifnextchar{.}%
  {\textit{cf}}%
  {\textit{cf.}\@\xspace}%
}
\newcommand*{\aka}{%
  \@ifnextchar{,}%
  {\textit{a.k.a.}}%
  {\textit{a.k.a.}\@\xspace}%
}
\title{Online And Lightweight Kernel-Based Approximate Policy Iteration for Dynamic
  p-Norm Linear Adaptive Filtering\vspace{-25pt}%
}
\name{}
\address{%
  \begin{minipage}{.7\textwidth}
    \begin{center}
      \textit{Yuki Akiyama\qquad Minh Vu\qquad Konstantinos Slavakis}\\[1ex] \small
      Tokyo Institute of Technology, Japan\\
      Department of Information and Communications Engineering\\
      Emails: \texttt{\{akiyama.y.am, vu.d.aa, slavakis.k.aa\}@m.titech.ac.jp}
    \end{center}
  \end{minipage}
  \vspace{-15pt}
}
\begin{document}
\ninept

\maketitle
\begin{abstract}
  This paper introduces a solution to the problem of selecting dynamically (online) the
  ``optimal'' p-norm to combat outliers in linear adaptive filtering without any knowledge on
  the probability density function of the outliers. The proposed online and data-driven
  framework is built on kernel-based reinforcement learning (KBRL). To this end, novel Bellman
  mappings on reproducing kernel Hilbert spaces (RKHSs) are introduced. These mappings do not
  require any knowledge on transition probabilities of Markov decision processes, and are
  nonexpansive with respect to the underlying Hilbertian norm. The fixed-point sets of the
  proposed Bellman mappings are utilized to build an approximate policy-iteration (API)
  framework for the problem at hand. To address the ``curse of dimensionality'' in RKHSs,
  random Fourier features are utilized to bound the computational complexity of the
  API. Numerical tests on synthetic data for several outlier scenarios demonstrate the superior
  performance of the proposed API framework over several non-RL and KBRL schemes.
\end{abstract}

\section{Introduction}\label{sec:intro}

The least-squares (LS) error/loss (between an observed value and its predicted one) plays a
pivotal role in signal processing, e.g., adaptive filtering~\cite{sayed2011adaptive}, and
machine learning~\cite{Theodoridis.Book:ML}. For example, the least-mean squares (LMS) and
recursive (R)LS~\cite{sayed2011adaptive} are two celebrated algorithms in adaptive filtering
and stochastic approximation based on the LS-error criterion. Notwithstanding, LS methods are
notoriously sensitive to the presence of outliers within data~\cite{rousseeuw1987}, where
outliers are defined as (sparsely) contaminating data that do not adhere to a nominal data
generation model, and are often modeled as random variables (RVs) with non-Gaussian heavy
tailed distributions, e.g., $\alpha$-stable ones~\cite{shao1993signal}. To combat outliers,
several non-LS criteria, such as least mean p-power (LMP)~\cite{pei1994p-power,
  xiao1999adaptive, Kuruoglu:02, vazquez2012, chen2015smoothed, slavakis2021outlier} and
maximum correntropy (MC)~\cite{Singh.MCC:09}, have been studied. This work focuses on the LMP
criterion, owing to the well-documented robustness of LMP against outliers~\cite{Gentile:03},
while results on MC will be reported elsewhere.

This study is built on the classical data-generation model
$y_n = \vectgr{\theta}_*^{\intercal} \vect{x}_n + o_n$, where $n\in\IntegerP$ denotes discrete
time ($\IntegerP$ is the set of all non-negative integers), $\vectgr{\theta}_*$ is the
$L\times 1$ vector whose entries are the system parameters that need to be identified, $o_n$ is
the RV which models outliers/noise, $(\vect{x}_n, y_n)$ stands for the input-output pair of
available data, where $\vect{x}_n$ is an $L\times 1$ vector and $y_n$ is real-valued, and
$\intercal$ denotes vector/matrix transposition. For an arbitrarily fixed $\vectgr{\theta}_0$,
the LMP algorithm~\cite{pei1994p-power} generates estimates
$(\vectgr{\theta}_n)_{n\in\mathbb{N}}$ of $\vectgr{\theta}_*$ according to the following
recursion:
\begin{equation}
  \vectgr{\theta}_{n+1} \coloneqq \vectgr{\theta}_n + \rho p \lvert e_n\rvert^{p-2}e_n
  \vect{x}_n \,, \label{LMP}
\end{equation}
where $e_n \coloneqq y_n - \vect{x}_n^{\intercal} \vectgr{\theta}_n$, $\rho$ is the learning
rate (step size), and $p$ is a \textit{fixed}\/ user-defined real-valued number within the
interval $[1, 2]$ to ensure that the $p$-norm loss
$\lvert y_n - \vect{x}_n^{\intercal} \vectgr{\theta} \rvert^p$ is a convex function of
$\vectgr{\theta}$~\cite{pei1994p-power}. Notice that if $p = 1$ and $2$, then \eqref{LMP} boils
down to the classical sign-LMS and LMS, respectively~\cite{sayed2011adaptive}.

Intuition suggests that the choice of $p$ should be based on the probability density
function (PDF) of the RV $o_n$. For example, if $o_n$ obeys a Gaussian PDF, then $p=2$ should
be chosen (recall the maximum-likelihood criterion). To enhance robustness against outliers,
combination of adaptive filters with different forgetting factors, but with the same fixed
$p$-norm, have been also introduced~\cite{vazquez2012}. Nevertheless, it seems that an
\textit{online}\/ and \textit{data-driven}\/ solution to the problem of \textit{dynamically}\/
selecting $p$, \textit{without}\/ any prior knowledge on the PDF of $o_n$, is yet to be found.

This work offers a solution to the aforementioned open problem via reinforcement learning
(RL)~\cite{bertsekas2019reinforcement}; a machine-learning paradigm where an ``agent''
interacts with the surrounding environment to identify iteratively the policy which minimizes
the cost of its ``actions.'' More specifically, the well-known policy-iteration (PI)
framework~\cite{bertsekas2019reinforcement} of RL is adopted, because of its well-documented
merits (e.g., \cite{ormoneit2002kernel, Ormoneit:Autom:02, xu2007klspi}) over the alternative
RL frameworks of temporal-difference (TD) and Q-learning~\cite{bertsekas2019reinforcement},
especially for continuous and high-dimensional state spaces. PI comprises two stages at every
iteration $n$: \textit{policy evaluation} and \textit{policy improvement}. At policy
evaluation, the current policy is evaluated by a
$Q$-function~\cite{bertsekas2019reinforcement}, which represents, loosely speaking, the
long-term cost that the agent would suffer had the current policy been chosen to determine the
next state, whereas at the policy-improvement stage, the agent uses the $Q$-function value to
update the policy. The underlying state space is considered to be continuous, due to the nature
of $(\mathbf{x}_n, y_n)$, while the action space is considered to be discrete: an action is a
value of $p$ taken from a finite grid of the interval $[1,2]$.

Deep neural networks offer approximating spaces for Q-functions, e.g., \cite{DDQN}, but they
may require processing of batch data (even re-training) during online-mode operation, since
they may face test data generated by PDFs different from those of the training ones (dynamic
environments). Such batch processing inflicts large computational times and complexity,
discouraging the application of deep neural networks to online modes of operation where a small
computational footprint is desired. To meet such computational complexity requirements,
this study builds an approximate (A)PI framework for \textit{online}\/ RL along the lines of
kernel-based (KB)RL~\cite{ormoneit2002kernel, Ormoneit:Autom:02, xu2007klspi, Bae:MLSP:11,
  Barreto:NIPS:11, Barreto:NIPS:12, Kveton_Theocharous_2013, OnlineBRloss:16, RegularizedPI:16,
  Kveton_Theocharous_2021, Wang_Principe:SPM:21}.

Central to the proposed API is the construction of novel Bellman mappings~\cite{bellman2003dp,
  bertsekas2019reinforcement}. The proposed Bellman mappings are defined on a reproducing
kernel Hilbert space (RKHS) $\mathcal{H}$~\cite{aronszajn1950, scholkopf2002learning}, which
serves as the approximating space for the $Q$-functions. Unlike the classical Bellman
operators, where information on transition probabilities in a Markov decision process is
needed~\cite{bertsekas2019reinforcement}, the proposed Bellman mappings make no use of such
information, and need neither training/offline data nor past policies, but sample and average
the sample space on the fly, at each iteration $n$, to perform \textit{exploration}\/ of the
surrounding environment. This suits the current adaptive-filtering setting, where the presence
of outliers, with a possibly time-varying PDF, may render the information obtained offline or
from past policies outdated. As such, the proposed Bellman mappings fall closer
to~\cite{Ormoneit:Autom:02} than to studies which use training data collected beforehand
(offline), e.g., \cite{lagoudakis2003lspi, xu2007klspi, panaganti2022robust}. 

Further, in contrast to the prevailing route in KBRL~\cite{ormoneit2002kernel,
  Ormoneit:Autom:02, Barreto:NIPS:11, Barreto:NIPS:12, Kveton_Theocharous_2013,
  OnlineBRloss:16, RegularizedPI:16, Kveton_Theocharous_2021}, which views Bellman mappings as
contractions in $\mathcal{L}_{\infty}$-norm Banach spaces (by definition, no inner product
available), this study introduces nonexpansive~\cite{HB.PLC.book} Bellman operators on
$\mathcal{H}$ to capitalize on the reproducing property of the inner product of
$\mathcal{H}$~\cite{aronszajn1950, scholkopf2002learning}, and to open the door to powerful
Hilbertian tools~\cite{HB.PLC.book}. A byproduct of this path is the additional flexibility
offered to the user by the fact that the fixed-point set of a nonexpansive mapping is
non-singleton in general, as opposed to the case of a contraction mapping which is known to
have a unique fixed point. Supersets of those fixed-point sets are designed to build the
proposed API framework.


It is worth stressing here that the proposed API framework, together with its complementary
study~\cite{Minh:ICASSP23}, appear to be the first attempts to apply RL arguments to robust
adaptive filtering. In contrast to \cite{Minh:ICASSP23}, where the state space is the
high-dimensional $\Real^{2L+1}$ ($\Real$ is the set of all real numbers), this study confines
the state space to the low-dimensional $\Real^4$. Moreover, this study constructs potentially
infinite-dimensional hyperplanes as supersets of the fixed-point sets of a proposed Bellman
mappings, as opposed to \cite{Minh:ICASSP23} where finite-dimensional affine sets are
designed. To address the ``curse of dimensionality,'' which arises naturally in online learning
in RKHSs ($\mathcal{H}$ may be infinite dimensional), the proposed framework uses random
Fourier features (RFF)~\cite{RFF, Konidaris:RL_Fourier:11} to bound the computational
complexity of the proposed API, while the approximate-linear-dependency
criterion~\cite{engel2004krls}, which does not ensure a bounded computational complexity, is
used in~\cite{Minh:ICASSP23}. Finally, to robustify the proposed scheme, experience
replay~\cite{ExperienceReplay} is applied, whereas \cite{Minh:ICASSP23} employs
rollout~\cite{bertsekas2019reinforcement}.

Numerical tests on synthetic data showcase the promising performance of the the advocated
framework, which outperforms several RL and non-RL schemes. Due to space limitations, long
proofs, the convergence analysis of the proposed framework, and further numerical tests will be
reported in the journal version of this paper.

\section{Nonexpansive Bellman Mappings on RKHS\MakeLowercase{s}}\label{sec:nonexp.Bellman}

\subsection{State-Action Space}\label{sec:state.action.space}

Following the setting of \eqref{LMP}, the state space $\mathfrak{S}$ is assumed to be
continuous. In contrast to~\cite{Minh:ICASSP23}, where the state space is the high dimensional
$\Real^{2L+1}$, this study considers the case where $\mathfrak{S} \coloneqq \Real^4$, with the
dimension of $\mathfrak{S}$ rendered independent of $L$. Due to
the streaming nature of $(\vect{x}_n, y_n)_{n\in\IntegerP}$, state vectors
$(\vect{s}_n \coloneqq [ s_1^{(n)}, s_2^{(n)}, s_3^{(n)}, s_4^{(n)}
]^{\intercal})_{n\in\IntegerP}$ are defined inductively by the following heuristic rules:
\begin{subequations}\label{def.states}
  \begin{align}
    s_1^{(n)}
    & \coloneqq \log_{10} \lvert y_n-\vectgr{\theta}_n^\intercal\vect{x}_n \rvert
      \,, \label{state1}\\
    s_2^{(n)}
    & \coloneqq \tfrac{1}{M_{\textnormal{av}}} \sum\nolimits_{k=1}^{M_{\textnormal{av}}} \log_{10}
      \frac{\lvert y_{n-k} - \vectgr{\theta}_n^\intercal
      \vect{x}_{n-k}\rvert}{\norm{\vect{x}_{n-k}}_2} \,, \label{state2}\\
    s_3^{(n)}
    & \coloneqq \log_{10} \norm{\vect{x}_n}_2 \,, \label{state3}\\
    s_4^{(n)}
    & \coloneqq \varpi s_4^{(n-1)} + (1 - \varpi) \log_{10} \frac{ \norm{\vectgr{\theta}_{n} -
      \vectgr{\theta}_{n-1}}_2 }{\rho} \,, \label{state4}
  \end{align}
\end{subequations}
where $M_{\textnormal{av}}\in \IntegerPP$, $\varpi \in (0,1)$ are user-defined parameters,
while $\rho$ comes from \eqref{LMP}. The classical \textit{prior loss}\/ of adaptive
filtering~\cite{sayed2011adaptive} is used in \eqref{state1}, an $M_{\textnormal{av}}$-length
sliding-window sampling average of the \textit{posterior loss}~\cite{sayed2011adaptive} is
provided in \eqref{state2}, normalized by the norm of the input signal to remove as much as
possible its effect on the error, the instantaneous norm of the input signal in \eqref{state3},
and a smoothing auto-regressive process in \eqref{state4} to monitor the consecutive
displacement of the estimates $(\vectgr{\theta}_n)_{n\in\IntegerP}$. The reason for including
$\rho$ in \eqref{state4} is to remove $\rho$'s effect from $s_4^{(n)}$. Owing to \eqref{LMP},
the initial value $s_4^{(0)}$ in \eqref{state4} is set equal to
$\log_{10}[ (1/\rho) \norm{\vectgr{\theta}_1 - \vectgr{\theta}_0}_2] = \log_{10}{p_0} + (p_0-1)
s_1^{(0)} + s_3^{(0)}$. The $\log_{10}(\cdot)$ function is employed to decrease the dynamic
range of the positive values in \eqref{def.states}.

The action space $\mathfrak{A}$ is defined as any finite grid of the interval $[1,2]$, so that
an action $a\in \mathfrak{A}$ becomes any value of $p$ taken from that finite grid. The
state-action space is defined as $\mathfrak{Z} \coloneqq \mathfrak{S}\times \mathfrak{A}$, and
its element is denoted as $\vect{z} = (\vect{s}, a)$.

Along the lines of the general notation in~\cite{bertsekas2019reinforcement}, consider now the
set of all mappings
$\mathcal{M} \coloneqq \Set{ \mu(\cdot) \given \mu(\cdot): \mathfrak{S} \to \mathfrak{A}:
  \vect{s} \mapsto \mu(\vect{s})}$. In other words, given a $\mu\in \mathcal{M}$,
$\mu(\vect{s})$ denotes the action that the ``system'' may take at state $\vect{s}$ to ``move
to'' the state $\vect{s}^{\prime}\in \mathfrak{S}$. The one-step loss for this transition is
denoted by $g: \mathfrak{Z} \to \Real: (\vect{s}, a) \mapsto g(\vect{s}, a)$. The set $\Pi$ of
policies is defined as
$\Pi \coloneqq \mathcal{M}^{\IntegerP} \coloneqq \Set{ (\mu_0, \mu_1, \ldots, \mu_n, \ldots)
  \given \mu_n\in \mathcal{M}, n\in \IntegerP}$. A policy will be denoted by $\pi\in
\Pi$. Given $\mu\in \mathcal{M}$, the stationary policy $\pi_{\mu} \in \Pi$ is defined as
$\pi_{\mu} \coloneqq ( \mu, \mu, \ldots, \mu, \ldots)$. It is customary for $\mu$ to denote
also the stationary policy $\pi_{\mu}$. Function
$Q: \mathfrak{Z}\to \Real: (\vect{s}, a) \mapsto Q(\vect{s}, a)$ quantifies the long-term cost
that the agent would suffer had the action $a$ been used to determine the next state of
$\vect{s}$.

\subsection{Novel Bellman Mappings}\label{sec:new.Bellman.maps}

Central to dynamic programming and RL~\cite{bertsekas2019reinforcement} is the concept of
Bellman mappings, which operate on $Q$-functions. Typical definitions, e.g.,
\cite{Bellemare:16}, are as follows: $\forall (\vect{s}, a)\in \mathfrak{Z}$,
\begin{subequations}\label{Bellman.maps.standard}
  \begin{align}
    (T_{\mu}^{\diamond} Q)(\vect{s}, a)
    & \coloneqq g( \vect{s}, a ) + \alpha \mathbb{E}_{\vect{s}^{\prime} \given (\vect{s}, a)}
      \{ Q(\vect{s}^{\prime}, \mu(\vect{s}^{\prime})) \}\,, \label{Bellman.standard.mu} \\
    (T^{\diamond} Q)(\vect{s}, a)
    & \coloneqq g( \vect{s}, a ) + \alpha \mathbb{E}_{\vect{s}^{\prime} \given (\vect{s}, a)}
      \{ \min_{a^{\prime}\in \mathfrak{A} }Q(\vect{s}^{\prime}, a^{\prime})
      \}\,, \label{Bellman.standard}
  \end{align}
\end{subequations}
where $\mathbb{E}_{\vect{s}^{\prime} \given (\vect{s}, a)}\{\cdot\}$ stands for the conditional
expectation operator with respect to $\vect{s}^{\prime}$ conditioned on $(\vect{s}, a)$, and
$\alpha$ is the discount factor with typical values in $[0,1)$. In the case where $Q$ is
considered an element of the Banach space of all (essentially) bounded
functions~\cite{Bartle.book:95}, equipped with the $\mathcal{L}_{\infty}$-norm
$\norm{}_{\infty}$, then it can be shown that the mappings in \eqref{Bellman.maps.standard} are
contractions~\cite{bertsekas2019reinforcement}, and according to the Banach-Picard
theorem~\cite{HB.PLC.book}, they possess \textit{unique}\/ fixed points
$Q^{\diamond}_{\mu}, Q^{\diamond}$, i.e., points which solve the Bellman equations
$T_{\mu}^{\diamond} Q^{\diamond}_{\mu} = Q^{\diamond}_{\mu}$ and
$T^{\diamond} Q^{\diamond} = Q^{\diamond}$, and which characterize ``optimal'' long-term
losses~\cite{bertsekas2019reinforcement}.

Nevertheless, in most cases of practical interest, there is not sufficient information on the
conditional probability distribution to compute the expectation operator in
\eqref{Bellman.maps.standard}. To this end, this study proposes approximations of the Bellman
mappings in \eqref{Bellman.maps.standard} by assuming that losses $g$ and $Q$ belong to an RKHS
$\mathcal{H}$, i.e., a Hilbert space with inner product $\innerp{\cdot}{\cdot}_{\mathcal{H}}$,
norm $\norm{\cdot}_{\mathcal{H}} \coloneqq \innerp{\cdot}{\cdot}_{\mathcal{H}}^{1/2}$, and a
reproducing kernel $\kappa(\cdot, \cdot): \mathfrak{Z}\times \mathfrak{Z} \to \Real$, such that
$\kappa(\vect{z}, \cdot)\in \mathcal{H}$, $\forall \vect{z}\in \mathfrak{Z}$, and the reproducing
property holds true: $Q(\vect{z}) = \innerp{Q}{\kappa(\vect{z}, \cdot)}_{\mathcal{H}}$,
$\forall Q\in \mathcal{H}$, $\forall \vect{z}\in \mathfrak{Z}$. Space $\mathcal{H}$ may be
infinite dimensional; e.g., $\kappa(\cdot, \cdot)$ is a Gaussian kernel~\cite{aronszajn1950,
  scholkopf2002learning}. For compact notations, let $\varphi(\vect{z}) \coloneqq
\kappa(\vect{z},\cdot)$, and $Q^{\intercal} Q^{\prime} \coloneqq
\innerp{Q}{Q^{\prime}}_{\mathcal{H}}$.

Hereafter, losses $g, Q$ are assumed to belong to $\mathcal{H}$. The proposed Bellman mappings
$T_{\mu}: \mathcal{H} \to \mathcal{H}: Q\mapsto T_{\mu}Q$ and
$T: \mathcal{H} \to \mathcal{H}: Q\mapsto TQ$ are defined as:
\begin{subequations}\label{Bellman.maps.new}
  \begin{align}
    T_{\mu} Q & \coloneqq g + \alpha \sum\nolimits_{j=1}^{N_{\textnormal{av}}}
                Q (\vect{s}^{\textnormal{av}}_j, \mu(\vect{s}^{\textnormal{av}}_j) ) \cdot \psi_j
                \,, \label{Bellman.new.mu}\\
    TQ & \coloneqq g + \alpha \sum\nolimits_{j=1}^{N_{\textnormal{av}}} \inf\nolimits_{a_j \in
         \mathfrak{A}} Q (\vect{s}^{\textnormal{av}}_j, a_j) )\cdot \psi_j
         \,, \label{Bellman.new}
  \end{align}
\end{subequations}
where $\{\psi_j\}_{j=1}^{N_{\textnormal{av}}}$ are vectors in $\mathcal{H}$, for a
user-defined positive integer $N_{\textnormal{av}}$, and
$\{ \vect{s}^{\textnormal{av}}_j \}_{j=1}^{N_{\textnormal{av}}}$ are state vectors chosen by
the user for the summations in \eqref{Bellman.maps.new} to approximate the conditional
expectations in \eqref{Bellman.maps.standard}. See for example \cite{Minh:ICASSP23}, where
$\{ \vect{s}^{\textnormal{av}}_j \}_{j=1}^{N_{\textnormal{av}}}$ are drawn from a Gaussian
distribution centered at a state of interest (the current state $\vect{s}_n$ in
\cref{sec:tests}). For notational convenience, let
$\vectgr{\Psi} \coloneqq [\psi_1, \ldots, \psi_{N_{\textnormal{av}}} ]$, and its
$N_{\textnormal{av}} \times N_{\textnormal{av}}$ kernel matrix
$\vect{K}_{\Psi} \coloneqq \vectgr{\Psi}^{\intercal} \vectgr{\Psi}$ whose $(j, j^{\prime})$
entry is equal to $\innerp{\psi_j}{\psi_{j^{\prime}}}_{\mathcal{H}}$. Moreover, let
$\vectgr{\Phi}_{\mu}^{\textnormal{av}} \coloneqq [\varphi^{\textnormal{av}}_{\mu,1}, \ldots,
\varphi^{\textnormal{av}}_{\mu,N_{\textnormal{av}}}]$, where
$\varphi^{\textnormal{av}}_{\mu, j} \coloneqq \varphi( \vect{s}_j^{\textnormal{av}},
\mu(\vect{s}_j^{\textnormal{av}}) )$, with kernel matrix
$\vect{K}^{\textnormal{av}}_{\mu} \coloneqq
{\vectgr{\Phi}}^{\textnormal{av}}_{\mu}{}^{\intercal} {\vectgr{\Phi}}^{\textnormal{av}}_{\mu}$.

\begin{thm}\label{thm:nonexp} Let $\psi_j(\vect{z})\geq 0$, $\forall \vect{z}\in
  \mathfrak{Z}$, $\forall j\in \{1, \ldots, N_{\textnormal{av}}\}$. If
  $\alpha \leq \norm{\vect{K}_{\Psi}}^{-1/2} (\sup_{\mu\in\mathcal{M}} \norm{
    \vect{K}^{\textnormal{av}}_{\mu}} )^{-1/2}$, then $\forall \mu\ \in \mathcal{M}$, the
  mapping $T_{\mu}$ in \eqref{Bellman.new.mu} is affine nonexpansive and $T$ in
  \eqref{Bellman.new} is nonexpansive within the Hilbert space
  $( \mathcal{H}, \innerp{\cdot}{\cdot}_{\mathcal{H}} )$. Norms
  $\norm{\vect{K}_{\Psi}}, \norm{\vect{K}^{\textnormal{av}}_{\mu}}$ are the spectral norms
  of $\vect{K}_{\Psi}, \vect{K}^{\textnormal{av}}_{\mu}$.
\end{thm}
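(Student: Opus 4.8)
The plan is to use the reproducing property to linearize $T_{\mu}$, to reduce the nonlinear $T$ to the affine case through a pointwise-optimal policy, and then to close both claims with two elementary Gram-matrix operator-norm bounds governed by $\vect{K}_{\Psi}$ and $\vect{K}^{\textnormal{av}}_{\mu}$. For $T_{\mu}$, the reproducing property gives $Q(\vect{s}^{\textnormal{av}}_j, \mu(\vect{s}^{\textnormal{av}}_j)) = \innerp{Q}{\varphi^{\textnormal{av}}_{\mu,j}}_{\mathcal{H}}$, whence \eqref{Bellman.new.mu} becomes the affine map
\[
  T_{\mu} Q = g + \alpha\, \vectgr{\Psi}\, (\vectgr{\Phi}^{\textnormal{av}}_{\mu})^{\intercal} Q,
\]
with linear part $A_{\mu} \coloneqq \alpha\, \vectgr{\Psi}\,(\vectgr{\Phi}^{\textnormal{av}}_{\mu})^{\intercal}$. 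Since $T_{\mu}Q - T_{\mu}Q' = A_{\mu}(Q - Q')$, affine nonexpansiveness is exactly the statement $\norm{A_{\mu}} \le 1$ in operator norm.

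First I would bound $\norm{A_{\mu}}$ by two successive estimates. Put $\vect{c} \coloneqq (\vectgr{\Phi}^{\textnormal{av}}_{\mu})^{\intercal} Q \in \Real^{N_{\textnormal{av}}}$. The synthesis step gives $\norm{\vectgr{\Psi}\vect{c}}_{\mathcal{H}}^2 = \vect{c}^{\intercal} \vect{K}_{\Psi} \vect{c} \le \norm{\vect{K}_{\Psi}}\, \norm{\vect{c}}_2^2$, because $\vect{K}_{\Psi} = \vectgr{\Psi}^{\intercal}\vectgr{\Psi}$ is symmetric positive semidefinite. The analysis step uses that $(\vectgr{\Phi}^{\textnormal{av}}_{\mu})^{\intercal}$ and $\vectgr{\Phi}^{\textnormal{av}}_{\mu}$ are mutually adjoint, so $\norm{(\vectgr{\Phi}^{\textnormal{av}}_{\mu})^{\intercal}}^2 = \norm{(\vectgr{\Phi}^{\textnormal{av}}_{\mu})^{\intercal}\vectgr{\Phi}^{\textnormal{av}}_{\mu}} = \norm{\vect{K}^{\textnormal{av}}_{\mu}}$, and hence $\norm{\vect{c}}_2^2 \le \norm{\vect{K}^{\textnormal{av}}_{\mu}}\, \norm{Q}_{\mathcal{H}}^2$. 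Chaining the two, $\norm{A_{\mu}} \le \alpha\, \norm{\vect{K}_{\Psi}}^{1/2} (\sup_{\mu}\norm{\vect{K}^{\textnormal{av}}_{\mu}})^{1/2} \le 1$ under the stated hypothesis on $\alpha$.

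The hard part will be the nonlinear $T$, whose $j$-th coefficient is $\inf_{a\in\mathfrak{A}} Q(\vect{s}^{\textnormal{av}}_j, a)$. Because $\mathfrak{A}$ is a finite grid, the infimum is a minimum, and I would invoke the elementary nonexpansiveness $\lvert \min_a f(a) - \min_a h(a)\rvert \le \max_a \lvert f(a) - h(a)\rvert$. For each $j$ let $a_j^{\star}\in\mathfrak{A}$ attain $\max_{a} \lvert Q(\vect{s}^{\textnormal{av}}_j, a) - Q'(\vect{s}^{\textnormal{av}}_j, a)\rvert$, and let $\mu^{\star}\in\mathcal{M}$ be any selector with $\mu^{\star}(\vect{s}^{\textnormal{av}}_j) = a_j^{\star}$. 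Then the coefficient differences $d_j$ obey $\lvert d_j\rvert \le \lvert \innerp{Q - Q'}{\varphi^{\textnormal{av}}_{\mu^{\star},j}}_{\mathcal{H}}\rvert$, so the vector $\vect{d} = (d_1,\ldots,d_{N_{\textnormal{av}}})^{\intercal}$ satisfies $\norm{\vect{d}}_2^2 \le \norm{(\vectgr{\Phi}^{\textnormal{av}}_{\mu^{\star}})^{\intercal}(Q - Q')}_2^2 \le \norm{\vect{K}^{\textnormal{av}}_{\mu^{\star}}}\, \norm{Q - Q'}_{\mathcal{H}}^2$. Feeding $\vect{d}$ through the same $\vect{K}_{\Psi}$ synthesis bound and replacing $\norm{\vect{K}^{\textnormal{av}}_{\mu^{\star}}}$ by $\sup_{\mu}\norm{\vect{K}^{\textnormal{av}}_{\mu}}$ yields $\norm{TQ - TQ'}_{\mathcal{H}} \le \alpha\, \norm{\vect{K}_{\Psi}}^{1/2}(\sup_{\mu}\norm{\vect{K}^{\textnormal{av}}_{\mu}})^{1/2}\, \norm{Q - Q'}_{\mathcal{H}} \le \norm{Q - Q'}_{\mathcal{H}}$.

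The main obstacle is precisely this reduction: the state-dependent extremal action must be packaged as a single admissible $\mu^{\star}\in\mathcal{M}$ so that the already-proved matrix estimate applies and the worst case is absorbed by $\sup_{\mu}\norm{\vect{K}^{\textnormal{av}}_{\mu}}$; this is where finiteness of $\mathfrak{A}$ (existence of the extremizer) and a consistent interpolation across the sampled states $\{\vect{s}^{\textnormal{av}}_j\}$ enter. I would finally observe that both Gram-matrix bounds are insensitive to signs, so the hypothesis $\psi_j\ge 0$ does not seem to be needed for the nonexpansiveness estimates themselves; rather it guarantees that $\{\psi_j\}$ act as bona fide averaging weights approximating the conditional expectation in \eqref{Bellman.maps.standard}, and it is the natural ingredient for the order/monotonicity properties exploited when constructing the fixed-point supersets.
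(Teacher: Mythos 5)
Your proof is correct, but a caveat on the comparison: this paper never actually proves \cref{thm:nonexp} --- the authors explicitly defer ``long proofs'' to the journal version --- so there is no in-paper argument to measure you against, and your proposal must stand on its own. It does. The linearization $T_{\mu}Q = g + \alpha \vectgr{\Psi}(\vectgr{\Phi}^{\textnormal{av}}_{\mu})^{\intercal}Q$ via the reproducing property, the two Gram-matrix identities $\norm{\vectgr{\Psi}} = \norm{\vect{K}_{\Psi}}^{1/2}$ and $\norm{(\vectgr{\Phi}^{\textnormal{av}}_{\mu})^{\intercal}} = \norm{\vect{K}^{\textnormal{av}}_{\mu}}^{1/2}$, and the chaining under the stated bound on $\alpha$ give affine nonexpansiveness of $T_{\mu}$ cleanly. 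For $T$, the reduction via $\lvert \min_a f(a) - \min_a h(a)\rvert \le \max_a \lvert f(a)-h(a)\rvert$ followed by packaging the per-state maximizers into a single $\mu^{\star}\in\mathcal{M}$ is the one genuinely delicate step, and you handle it correctly: since $\mathcal{M}$ is the set of \emph{all} maps $\mathfrak{S}\to\mathfrak{A}$ with no regularity constraints, and coincident sample states pose the identical maximization problem, the selector $\mu^{\star}$ is well defined, and $\norm{\vect{K}^{\textnormal{av}}_{\mu^{\star}}}$ is absorbed by the supremum in the hypothesis. Your closing observation is also accurate and worth keeping: none of the Hilbertian-norm estimates uses $\psi_j(\vect{z})\ge 0$, so that hypothesis is not needed for the nonexpansiveness claims as proved this way; it is most plausibly there to make the $\psi_j$ legitimate averaging weights mimicking the conditional expectation in \eqref{Bellman.maps.standard}, or to serve a different (e.g., pointwise/monotonicity-based) proof strategy of the kind common in the $\mathcal{L}_{\infty}$ KBRL literature.
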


Nonexpansivity for $T$ in a (Euclidean) Hilbert space
$( \mathcal{H}, \innerp{\cdot}{\cdot}_{\mathcal{H}} )$ means
$\norm{ TQ - TQ^{\prime}}_{\mathcal{H}} \leq \norm{ Q - Q^{\prime}}_{\mathcal{H}}$,
$\forall Q, Q^{\prime}\in \mathcal{H}$~\cite{HB.PLC.book}. Moreover,
$T_{\mu}: \mathcal{H} \to \mathcal{H}$ is affine iff
$T_{\mu}( \lambda Q + (1-\lambda)Q^{\prime}) = \lambda TQ + (1 - \lambda)TQ^{\prime}$,
$\forall Q, Q^{\prime}\in \mathcal{H}$, $\forall\lambda \in \Real$.

Mappings \eqref{Bellman.maps.new} share similarities with those in~\cite{ormoneit2002kernel,
  Ormoneit:Autom:02, Barreto:NIPS:11, Barreto:NIPS:12, Kveton_Theocharous_2013,
  Kveton_Theocharous_2021}. However, in~\cite{ormoneit2002kernel, Ormoneit:Autom:02,
  Barreto:NIPS:11, Barreto:NIPS:12, Kveton_Theocharous_2013, Kveton_Theocharous_2021} as well
as in the classical context of \eqref{Bellman.maps.standard}, Bellman mappings are viewed as
contractions on the Banach space of (essentially) bounded functions with the
$\mathcal{L}_{\infty}$-norm~\cite{bertsekas2019reinforcement}, while no discussion on RKHSs is
reported. Recall that, by definition, Banach spaces are not equipped with inner products. On
the other hand, \cref{thm:nonexp} opens the door not only to the rich toolbox of nonexpansive
mappings in Hilbert spaces~\cite{HB.PLC.book}, but also to the reproducing property of the
inner product in RKHSs~\cite{aronszajn1950, scholkopf2002learning}.

\section{Approximate Policy Iteration}\label{sec:algo}

\begin{algorithm}[t]
  \begin{algorithmic}[1]
    \renewcommand{\algorithmicindent}{1em}

    \State{Arbitrarily initialize $Q_0$, $\mu_0\in\mathcal{M}$, and $\vectgr{\theta}_0\in
      \Real^L$.}

    \While{$n \in \mathbb{N}$}\label{line:iter}

    \State{Data $(\vect{x}_n, y_n)$ become available. Let $\vect{s}_n$ as in
      \eqref{def.states}.}

    \State{\textbf{Policy improvement:} Update $a_n \coloneqq \mu_n(\vect{s}_n)$ by
      \eqref{policy.improvement}.}\label{algo:policy.improvement}

    \State{Update $\vectgr{\theta}_{n+1}$ by \eqref{LMP}, where $p \coloneqq
      a_n = \mu_n(\vect{s}_n)$.}

    \State{Define $\{\vect{s}_j^{\textnormal{av}}[n]\}_{j=1}^{N_{\textnormal{av}}[n]}$ (see
      \cref{sec:algo}).}

    \State{Run experience replay on $Q_n$ (see \cref{sec:algo}).}

    \State{\textbf{Policy evaluation:} Update $Q_{n+1}$ by \eqref{Q.update}.}

    \State{Increase $n$ by one, and go to Line \ref{line:iter}.}

    \EndWhile
  \end{algorithmic}

  \caption{Approximate policy iteration for LMP.}\label{algo}

\end{algorithm}

With the Bellman mappings \eqref{Bellman.maps.new} serving as approximations of the classical
ones \eqref{Bellman.maps.standard}, \cref{algo} offers an \textit{approximate}\/ policy
iteration (API) framework. The framework operates sequentially, with its iteration index $n$
coinciding with the time index of the streaming data $(\vect{x}_n, y_n)_{n\in \IntegerP}$ of
\eqref{LMP}. To this end, the arguments of \cref{sec:nonexp.Bellman} are adapted to include
hereafter the extra time dimension $n$, which will be indicated by the super-/sub-scripts
$[n]$, $(n)$ or $n$ in notations.

\cref{algo} follows the standard path of PI~\cite{bertsekas2019reinforcement}. Policy
improvement is performed in \cref{algo:policy.improvement} of \cref{algo} according to the
standard greedy rule of~\cite{bertsekas2019reinforcement}
\begin{align}
  \mu_n(\vect{s}_n) \coloneqq \arg\min\nolimits_{a\in \mathfrak{A}} Q_n( \vect{s}_n, a)
  \,. \label{policy.improvement}
\end{align}
A different way for policy improvement via rollout can be found in~\cite{Minh:ICASSP23}.

The following proposition constructs a superset for the fixed-point set of
$\Fix T_{\mu_n}^{(n)}$. The superset $\mathscr{H}_n$ is a potentially infinite-dimensional
hyperplane, in contrast to the superset in~\cite{Minh:ICASSP23} which is a finite-dimensional
affine set.

\begin{prop}\label{prop}
  The fixed-point set
  $\Fix T_{\mu_n}^{(n)} \coloneqq \{ Q\in \mathcal{H} \given T_{\mu_n}^{(n)}Q = Q \}$ is a
  subset of the hyperplane
  $\mathscr{H}_n \coloneqq \{ Q\in \mathcal{H} \given g_n(\vect{z}_n) =
  \innerp{Q}{h_n}_{\mathcal{H}} \}$, where
  $h_n \coloneqq \varphi(\vect{z}_n) - \alpha ({1}/{N_{\textnormal{av}}[n]})
  \sum\nolimits_{j=1}^{N_{\textnormal{av}}[n]} \varphi^{\textnormal{av}}_{\mu_n, j}[n]$.
\end{prop}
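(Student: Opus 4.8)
The plan is to fix an arbitrary $Q \in \Fix T_{\mu_n}^{(n)}$ and verify that it satisfies the single linear constraint that defines $\mathscr{H}_n$; since $\mathscr{H}_n$ is cut out by one equation, showing that every fixed point obeys it yields the inclusion $\Fix T_{\mu_n}^{(n)} \subseteq \mathscr{H}_n$ at once. The only machinery needed is the reproducing property of $\mathcal{H}$ together with the time-indexed form of \eqref{Bellman.new.mu}; no nonexpansiveness, boundedness, or convergence argument enters, since the statement is purely algebraic.

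First I would write the fixed-point identity $Q = T_{\mu_n}^{(n)}Q$ explicitly as an equation in $\mathcal{H}$, namely $Q = g_n + \alpha \sum_{j=1}^{N_{\textnormal{av}}[n]} Q(\vect{s}_j^{\textnormal{av}}[n], \mu_n(\vect{s}_j^{\textnormal{av}}[n]))\, \psi_j[n]$, and then evaluate both sides at the current state-action pair $\vect{z}_n$. By the reproducing property, evaluation at $\vect{z}_n$ coincides with taking the inner product against $\varphi(\vect{z}_n)$, so the left-hand side becomes $\innerp{Q}{\varphi(\vect{z}_n)}_{\mathcal{H}} = Q(\vect{z}_n)$, while the term $g_n(\vect{z}_n)$ appears on the right. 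The decisive ingredient is the normalization built into the time-indexed specialization used in \cref{algo}: the vectors $\psi_j[n]$ are chosen so that $\psi_j[n](\vect{z}_n) = 1/N_{\textnormal{av}}[n]$ for every $j$, which is exactly what produces the factor $1/N_{\textnormal{av}}[n]$ in $h_n$. With that, the weighted sum $\sum_j Q(\vect{s}_j^{\textnormal{av}}[n], \mu_n(\vect{s}_j^{\textnormal{av}}[n]))\,\psi_j[n](\vect{z}_n)$ collapses to the arithmetic mean $(1/N_{\textnormal{av}}[n])\sum_j Q(\vect{s}_j^{\textnormal{av}}[n], \mu_n(\vect{s}_j^{\textnormal{av}}[n]))$.

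Finally I would rewrite each averaged value through the reproducing property as $Q(\vect{s}_j^{\textnormal{av}}[n], \mu_n(\vect{s}_j^{\textnormal{av}}[n])) = \innerp{Q}{\varphi^{\textnormal{av}}_{\mu_n,j}[n]}_{\mathcal{H}}$, transpose the mean to the left-hand side, and use bilinearity of $\innerp{\cdot}{\cdot}_{\mathcal{H}}$ to collect all $Q$-dependent terms into one inner product, giving $\innerp{Q}{\varphi(\vect{z}_n) - \alpha(1/N_{\textnormal{av}}[n])\sum_{j=1}^{N_{\textnormal{av}}[n]} \varphi^{\textnormal{av}}_{\mu_n,j}[n]}_{\mathcal{H}} = g_n(\vect{z}_n)$, i.e.\ $\innerp{Q}{h_n}_{\mathcal{H}} = g_n(\vect{z}_n)$, which is precisely membership in $\mathscr{H}_n$. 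The one point to be careful about is this normalization: the clean hyperplane form hinges on $\psi_j[n](\vect{z}_n) = 1/N_{\textnormal{av}}[n]$, so I would confirm that the time-indexed choice of $\psi_j[n]$ in \cref{algo} (e.g.\ $\psi_j[n] = (1/N_{\textnormal{av}}[n])\varphi(\vect{z}_n)$ with a normalized kernel, so that $\kappa(\vect{z}_n,\vect{z}_n)=1$) indeed enforces it; everything else is routine bookkeeping with the reproducing property.
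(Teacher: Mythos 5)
Your proof is correct and follows essentially the same route as the paper's: take $Q\in\Fix T_{\mu_n}^{(n)}$, pair the fixed-point identity with $\varphi(\vect{z}_n)$ via the reproducing property, invoke the design choice $\psi_j^{(n)}(\vect{z}_n)=1/N_{\textnormal{av}}[n]$, and collect terms by bilinearity into $\innerp{Q}{h_n}_{\mathcal{H}}=g_n(\vect{z}_n)$. You also correctly flag the one real subtlety — the paper indeed resolves it exactly as you suspect, by \emph{designing} the $\psi_j^{(n)}$ to lie in the hyperplane $\{\psi\in\mathcal{H} \given \innerp{\psi}{\varphi(\vect{z}_n)}_{\mathcal{H}}=1/N_{\textnormal{av}}[n]\}$, so the normalization is an assumption built into the construction rather than something to be derived.
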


\begin{proof}
  Consider any $Q\in \Fix T_{\mu_n}^{(n)}$. With $\vect{z}_n \coloneqq (\vect{s}_n, a_n)$,
  where $\vect{s}_n$ and $a_n$ are the current state vector and action at iteration $n$,
  respectively, the reproducing property of the inner product of $\mathcal{H}$ yields
  $0 = \innerp{Q-T_{\mu_n}^{(n)}Q}{\varphi(\vect{z}_n)}_{\mathcal{H}}$, or via
  \eqref{Bellman.new.mu}:
  \begin{align}
    & 0 = \innerp{Q - g_n - \alpha \sum_{j=1}^{N_{\textnormal{av}}[n]}
      Q (\vect{s}^{\textnormal{av}}_j[n], \mu_n(\vect{s}^{\textnormal{av}}_j[n]) )
      \psi_j^{(n)} }{\varphi(\vect{z}_n)}_{\mathcal{H}} \notag \\
    \Leftrightarrow {}
    & {} g_n(\vect{z}_n) = Q(\vect{z}_n) - \alpha \sum_{j=1}^{N_{\textnormal{av}}[n]}
      Q (\vect{s}^{\textnormal{av}}_j[n], \mu_n(\vect{s}^{\textnormal{av}}_j[n]) )
      \psi_j^{(n)}(\vect{z}_n) \,. \label{Hyperplane.1}
  \end{align}


  Vectors $\{ \psi_j^{(n)} \}_{j=1}^{N_{\textnormal{av}}[n]}$ can be designed as follows:
  $\psi_j^{(n)} \in \{ \psi\in\mathcal{H} \given \innerp{\psi}{
    \varphi(\vect{z}_n)}_{\mathcal{H}} = \psi(\vect{z}_n) = 1/N_{\textnormal{av}}[n]\}$,
  $\forall j\in N_{\textnormal{av}}[n]$. Notice that the set from where $\psi_j^{(n)}$ is chosen
  from is a nonempty hyperplane in $\mathcal{H}$. Under this choice, \eqref{Hyperplane.1}
  becomes:
  \begin{align}
    & g_n(\vect{z}_n) = Q(\vect{z}_n) - \alpha \tfrac{1}{N_{\textnormal{av}}[n]}
      \sum\nolimits_{j=1}^{N_{\textnormal{av}}[n]} Q (\vect{s}^{\textnormal{av}}_j[n],
      \mu_n(\vect{s}^{\textnormal{av}}_j[n]) ) \notag \\
    \Leftrightarrow {}
    & {} g_n(\vect{z}_n) = \innerp{Q}{ \varphi(\vect{z}_n) - \alpha
      \tfrac{1}{N_{\textnormal{av}}[n]} \sum\nolimits_{j=1}^{N_{\textnormal{av}}[n]}
      \varphi^{\textnormal{av}}_{\mu_n, j}[n] \,}_{\mathcal{H}}
      \,, \label{Hyperplane.2}
  \end{align}
  which completes the proof.
\end{proof}

The one-step loss function $g_n\in\mathcal{H}$ is chosen here such that
\begin{align}
  g_n(\vect{z}_n) = \tfrac{1}{M_{\textnormal{av}}} \sum\nolimits_{k=0}^{M_{\textnormal{av}}-1}
  \log_{10} \frac{\lvert y_{n-k} -
  \vectgr{\theta}_{n+1}^\intercal\vect{x}_{n-k}\rvert}{\norm{\vect{x}_{n-k}}_2}
  \,, \label{one.step.loss}
\end{align}
where $\vectgr{\theta}_{n+1}$ is provided by \eqref{LMP}. Recall by \eqref{state2} that the
right-hand-side of \eqref{one.step.loss} is nothing but $s_2^{(n+1)}$. There always exists
$g_n\in\mathcal{H}$ such that \eqref{one.step.loss} is satisfied, since one can choose any
$g_n$ from the nonempty hyperplane:
$\{ g\in \mathcal{H} \given \innerp{g}{\varphi(\vect{z}_n)}_{\mathcal{H}} = g(\vect{z}_n) =
s_2^{(n+1)}\}$.

Given the current estimate $Q_n$ of the Q-function, there are several ways to update to
$Q_{n+1}$ from the hyperplane $\mathscr{H}_n$. For example,
$Q_{n+1} \coloneqq P_{\mathscr{H}_n}( Q_n )$, where $P_{\mathscr{H}_n}(\cdot)$ stands for the
(metric) projection mapping onto $\mathscr{H}_n$~\cite{HB.PLC.book}. Or, via the minimum-norm
solution $Q_{n+1} \coloneqq P_{\mathscr{H}_n}( 0 )$. Nevertheless, to offer even a more
standard approach, the classical steepest-descent methodology on the quadratic loss
$\mathscr{L}_n(Q) \coloneqq (1/2) [ \innerp{Q}{h_n}_{\mathcal{H}} - g_n(\vect{z}_n) ]^2$:
\begin{align}
  Q_{n+1}
  & \coloneqq Q_n - \eta \nabla \mathscr{L}_n(Q_n) \notag \\
  & = Q_n - \eta [ \innerp{Q_n}{h_n}_{\mathcal{H}} - g_n(\vect{z}_n) ] h_n \,,\label{Q.update}
\end{align}
is provided here, where $\eta$ is the learning rate (step size).

\sloppy Although there are many ways to generate samples
$\{ \vect{s}_j^{\textnormal{av}}[n] \}_{j=1}^{ N_{\textnormal{av}}[n] }$, see for
example~\cite{Minh:ICASSP23}, a different approach than~\cite{Minh:ICASSP23} is followed
here. In short, past data are re-used, as in
$\vect{s}_j^{\textnormal{av}} [n] \coloneqq [\log_{10} |y_{n+1-j} -
\vectgr{\theta}_{n+1}^\intercal \vect{x}_{n+1-j}|, s_2^{(n)},
\log_{10}\norm{\vect{x}_{n+1-j}}_2, s_4^{(n)}]^{\intercal}$,
$j\in\{ 1, \ldots, N_{\textnormal{av}}[n]\}$, to capitalize on the fact that RVs
$(\vect{x}_n)_{n\in\IntegerP}$ are IID in \cref{sec:tests}.

To robustify the proposed API, (prioritized) experience replay~\cite{ExperienceReplay} is
utilized to allow re-use of past data. To this end, an experience-replay (ER) buffer is
constructed to comprise information
$\{ \vect{s}_{\nu}, a_{\nu}, g_{\nu}, \{ \vect{s}_j^{\textnormal{av}}[\nu]
\}_{j=1}^{N_{\textnormal{av}}[\nu]} \}$ which is collected at instances $\nu$ taken from
$\{1, \ldots, n\}$. Whenever experience replay is applied, data from the ER buffer are
utilized. In short, the following route is followed at each $n$:
$Q_n \to \eqref{Q.update} \to [\text{Re-use past data from the ER buffer}] \to \eqref{Q.update}
\to Q_{n+1}$. Details on how to select information for the ER buffer and to utilize that
information in the proposed API will be reported in the journal version of the paper.

A direct application of \eqref{Q.update} may lead to memory and computational complications,
since at each $n$, \eqref{Q.update} potentially adds new kernel functions into the
representation of $Q_{n+1}$ via $h_n$. This unpleasant phenomenon is fueled by the potential
infinite dimensionality of $\mathcal{H}$; see for example the case where the kernel of
$\mathcal{H}$ is the Gaussian~\cite{scholkopf2002learning}
$\kappa_{\textnormal{G}}( \vect{z}, \vect{z}^{\prime} )$,
$( \vect{z}, \vect{z}^{\prime} )\in \mathcal{H}^2$, as in \cref{sec:tests}. To address this
``curse of dimensionality,'' this work employs the methodology of RFF~\cite{RFF}. Avoiding most
of the details due to space limitations,
$\kappa_{\textnormal{G}}( \vect{z}, \vect{z}^{\prime} )$ is approximated by the following inner
product $\tilde{\varphi}(\vect{z})^{\intercal} \tilde{\varphi}(\vect{z}^{\prime})$, where the
Euclidean feature vector
\begin{align}
    \tilde{\varphi} (\vect{z}) \coloneqq (\tfrac{2}{D})^{1/2}
  [ \cos{(\vect{v}_1^{\intercal} \vect{z} + b_1)}, \ldots, \cos{\vect{(v}_D^{\intercal}
  \vect{z} + b_D)} ]^{\intercal} \,, \label{RFF.feature}
\end{align}
with $D\in\IntegerPP$ being a user-defined dimension, while $\{\vect{v}_i\}_{i=1}^D$ and
$\{b_i\}_{i=1}^D$ are RVs following the Gaussian and uniform distributions, respectively. The
feature mapping \eqref{RFF.feature} is used instead of $\varphi(\cdot)$ throughout this work to
transfer learning from the infinite dimensional $(\mathcal{H}, \kappa_{\textnormal{G}})$ to the
$D$-dimensional $\Real^D$. Mapping \eqref{RFF.feature} together with the low-complexity
iteration \eqref{Q.update} yield an API with bounded computational complexity.

\section{Numerical Tests}\label{sec:tests}

\sloppy \cref{algo} is tested against
\begin{enumerate*}[label=\textbf{(\roman*)}]

\item \eqref{LMP}, for the values $p\in \mathfrak{A}\coloneqq \{1,1.25,1.5,1.75,2\}$, which
  are kept fixed throughout all iterations,

\item \cite{vazquez2012}, which uses a combination of adaptive filters with
  different forgetting factors but with the same fixed $p$-norm,

\item the kernel-based TD(0)~\cite{kernelTD1}, equipped with RFF and experience replay, and

\item the kernel-based (K)LSPI~\cite{xu2007klspi}; see \Cref{fig:vs.LMP,fig:vs.TD.KLSPI}.

\end{enumerate*}
Tests were also run to examine the effect of several of \cref{algo}'s parameters on
performance; see \cref{fig:vs.params}. The metric of performance is the normalized deviation
from the desired $\vectgr{\theta}_*$; see the vertical axes in all figures. The Gaussian
kernel~\cite{scholkopf2002learning} was used, approximated by RFF as described in
\cref{sec:algo}. The dimension $L$ of $\vect{x}_n, \vectgr{\theta}_*$ in \eqref{LMP} is $100$,
with a learning rate $\rho = 10^{-3}$. Both $\vect{x}_n$ and $\vectgr{\theta}_*$ are generated
from the Gaussian distribution $\mathcal{N}(\vect{0}, \vect{I}_{L})$, with
$(\vect{x}_n)_{n\in\IntegerP}$ designed to be IID. Moreover, $M_{\textnormal{av}} = 300$ and
$\varpi = 0.3$ in \eqref{def.states}, and $\eta = 0.5$ in \eqref{Q.update}.

Two types of outliers were considered. First, $\alpha$-stable outliers, generated
by~\cite{miotto2016pylevy}. Parameters $\alpha_{\textnormal{stable}} = 1$,
$\beta_{\textnormal{stable}} = 0.5$, $\sigma_{\textnormal{stable}} = 1$ were used, which yield
a considerably heavy-tailed distribution for the outliers. Second, ``sparse'' outliers were
generated, with values taken from the interval $[-100, 100]$ via the uniform
distribution. Sparse outliers appear in $10\%$ percent of the data, whereas in the rest $90\%$
of the data, Gaussian noise with $\textnormal{SNR} = 30\textnormal{dB}$ appears. As it is
customary in adaptive filtering, system $\vectgr{\theta}_*$ is changed at time $20,000$ to test
the tracking ability of \cref{algo}. Each test is repeated independently for $100$ times, and
uniformly averaged curves are reported.

As it can be verified by \Cref{fig:vs.LMP,fig:vs.TD.KLSPI,fig:vs.params}, \cref{algo}
outperforms the competing methods. KLSPI~\cite{xu2007klspi} fails to provide fast convergence and
performance close to the levels of the rest of the methods. The kernel-based
TD(0)~\cite{kernelTD1} converges fast, but with a subpar performance with regards to that of
the proposed framework. More tests on several other scenarios, together with the results
of~\cite{Minh:ICASSP23}, will be reported in the journal version of the paper.

\begin{figure}[t]
  \centering
  \subfloat[$\alpha$-stable outliers]{ \includegraphics[ width
    = .24\textwidth]{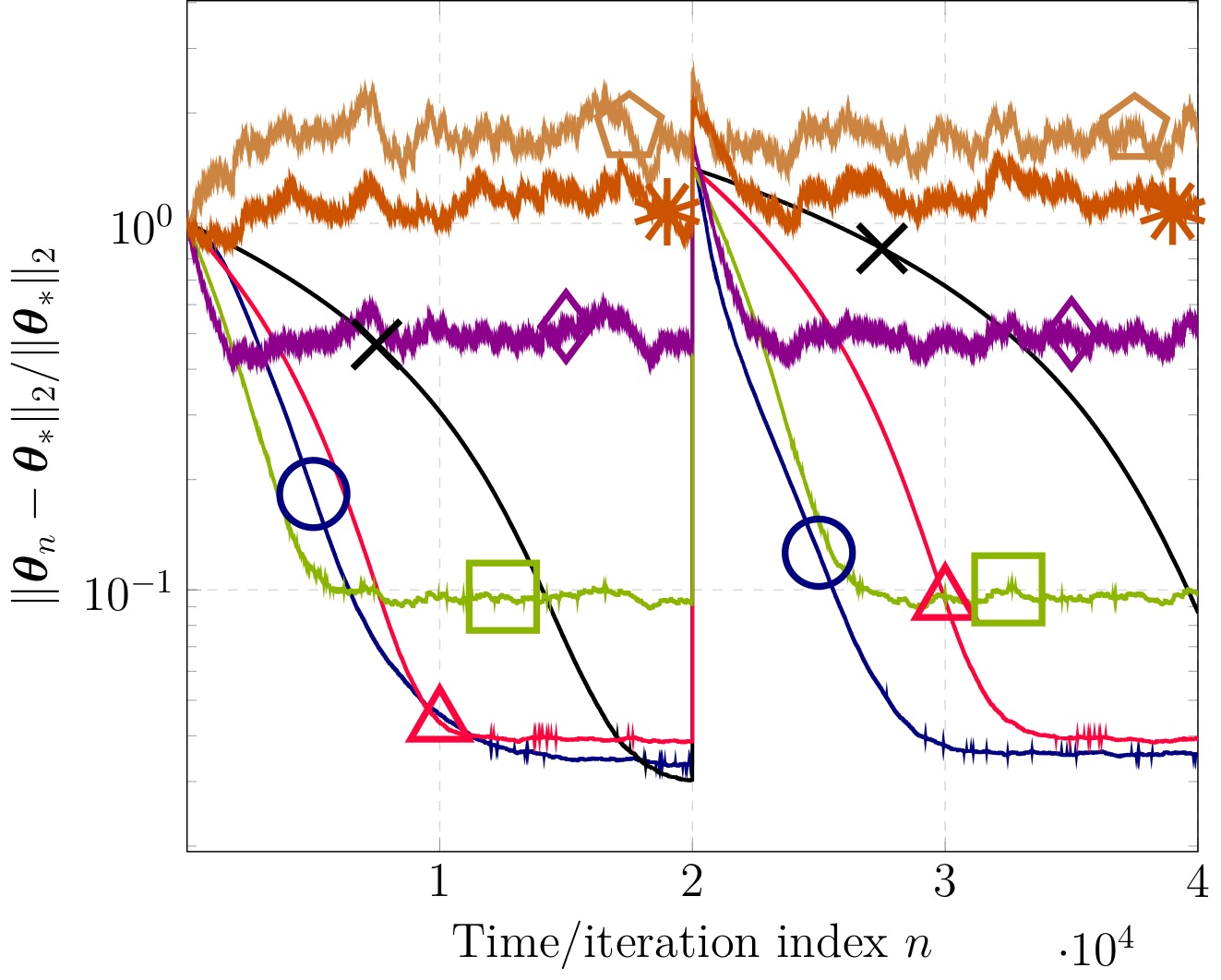}\label{1-1} }
  \subfloat[Sparse outliers]{ \includegraphics[ width =
    .24\textwidth]{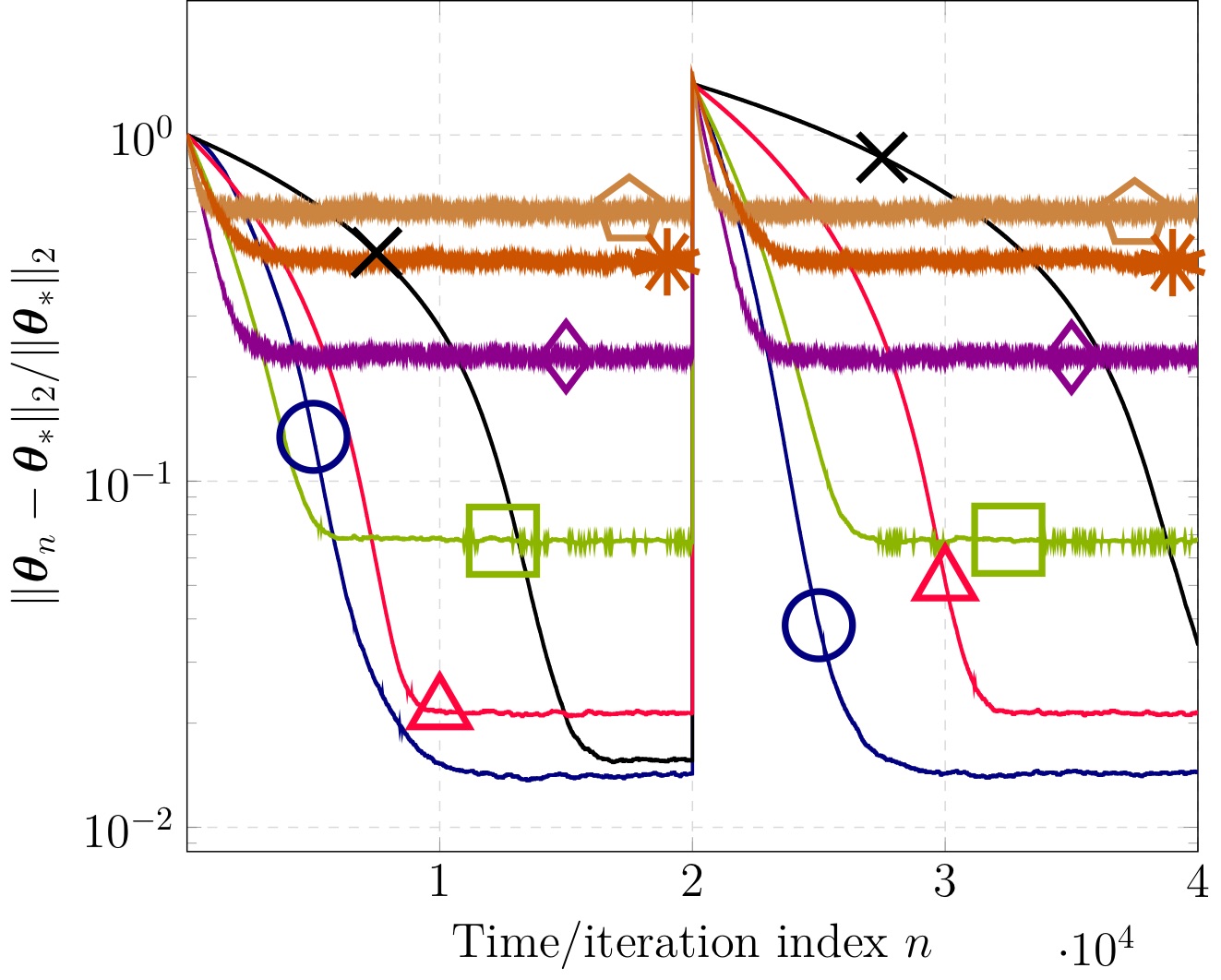}\label{1-2} }
  \caption{\protect\tikz[baseline = -0.5ex]{ \protect\node[mark size = 3pt, color = navy, line
      width = .5pt ] {\protect\pgfuseplotmark{o}}}: \cref{algo} w/
    $N_{\text{av}}=10, \alpha=0.75$. Marks \protect\tikz[baseline = -0.5ex]{ \protect\node[mark
      size = 3pt, color = black, line width = .5pt ] {\protect\pgfuseplotmark{x}}},
    \protect\tikz[baseline = -0.5ex]{ \protect\node[mark size = 3pt, color = americanrose, line
      width = .5pt ] {\protect\pgfuseplotmark{triangle}}}, \protect\tikz[baseline = -0.5ex]{
      \protect\node[mark size = 3pt, color = applegreen, line width = .5pt ]
      {\protect\pgfuseplotmark{square}}}, \protect\tikz[baseline = -0.5ex]{ \protect\node[mark
      size = 3pt, color = darkmagenta, line width = .5pt ] {\protect\pgfuseplotmark{diamond}}},
    \protect\tikz[baseline = -0.5ex]{ \protect\node[mark size = 3pt, color = peru, line width =
      .5pt ] {\protect\pgfuseplotmark{pentagon}}} correspond to \eqref{LMP} w/
    $p=1, 1.25, 1.5, 1.75, 2$, respectively. Mark \protect\tikz[baseline = -0.5ex]{
      \protect\node[mark size = 3pt, color = burntorange, line width = .5pt ]
      {\protect\pgfuseplotmark{10-pointed star}}} denotes an algorithm which randomly chooses
    $p$, $\forall n$.}\label{fig:vs.LMP}
\end{figure}

\begin{figure}[t]
  \centering \subfloat[$\alpha$-stable outliers]{ \includegraphics[width =
    .24\textwidth]{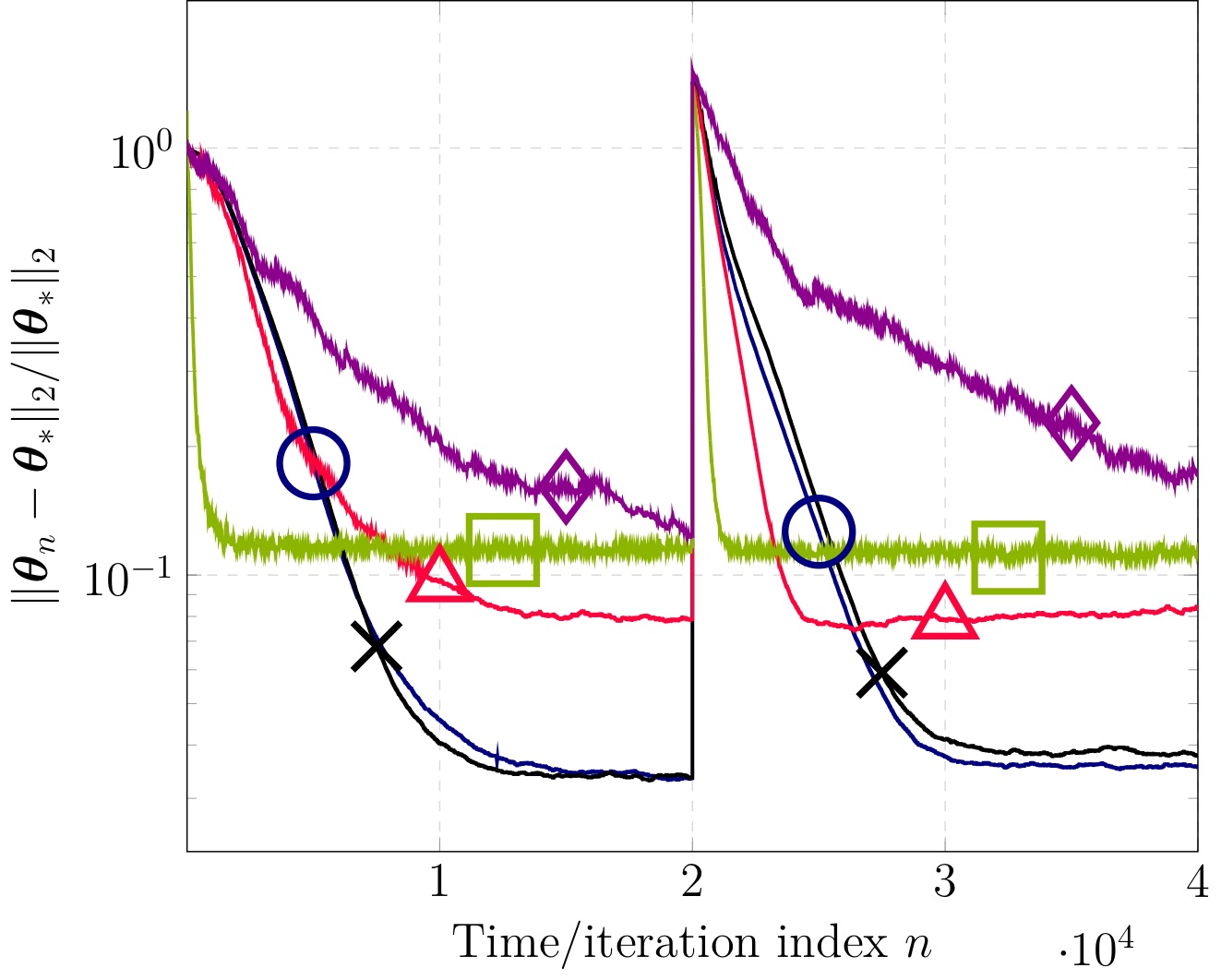}\label{3-1} }
  \subfloat[Sparse outliers]{\includegraphics[width = .24\textwidth]{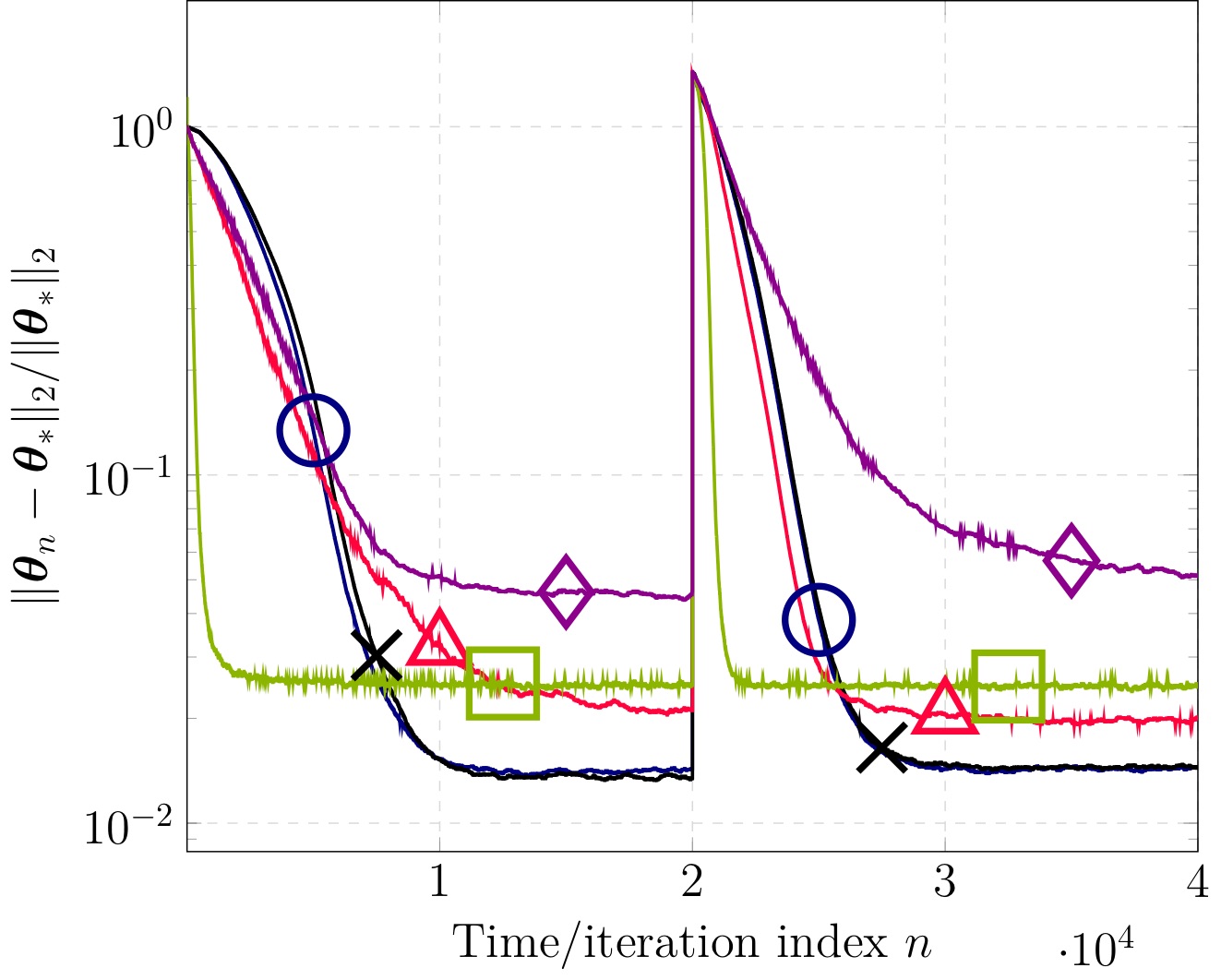}\label{3-2}}
  \caption{\protect\tikz[baseline = -0.5ex]{ \protect\node[mark size = 3pt, color = navy, line
      width = .5pt ] {\protect\pgfuseplotmark{o}}}: \cref{algo} w/
    $N_{\textnormal{av}}=10, \alpha=0.75$. \protect\tikz[baseline = -0.5ex]{ \protect\node[mark
      size = 3pt, color = black, line width = .5pt ] {\protect\pgfuseplotmark{x}}}: \cref{algo}
    w/ $N_{\textnormal{av}}=1, \alpha=0.75$. \protect\tikz[baseline = -0.5ex]{
      \protect\node[mark size = 3pt, color = americanrose, line width = .5pt ]
      {\protect\pgfuseplotmark{triangle}}}: Kernel-based TD(0) w/
    $\alpha = 0.9$~\cite{kernelTD1}. \protect\tikz[baseline = -0.5ex]{ \protect\node[mark size
      = 3pt, color = applegreen, line width = .5pt ] {\protect\pgfuseplotmark{square}}}:
    \cite{vazquez2012} w/ $p=1, \gamma_1 = 0.9, \gamma_2 =
    0.99$. \protect\tikz[baseline = -0.5ex]{ \protect\node[mark size = 3pt,
      color = darkmagenta, line width = .5pt ] {\protect\pgfuseplotmark{diamond}}}: KLSPI w/
    $\alpha=0.9$~\cite{xu2007klspi}}\label{fig:vs.TD.KLSPI}
\end{figure}

\begin{figure}[t]
  \centering \subfloat[$\alpha$-stable outliers]{\includegraphics[width = .24\textwidth]{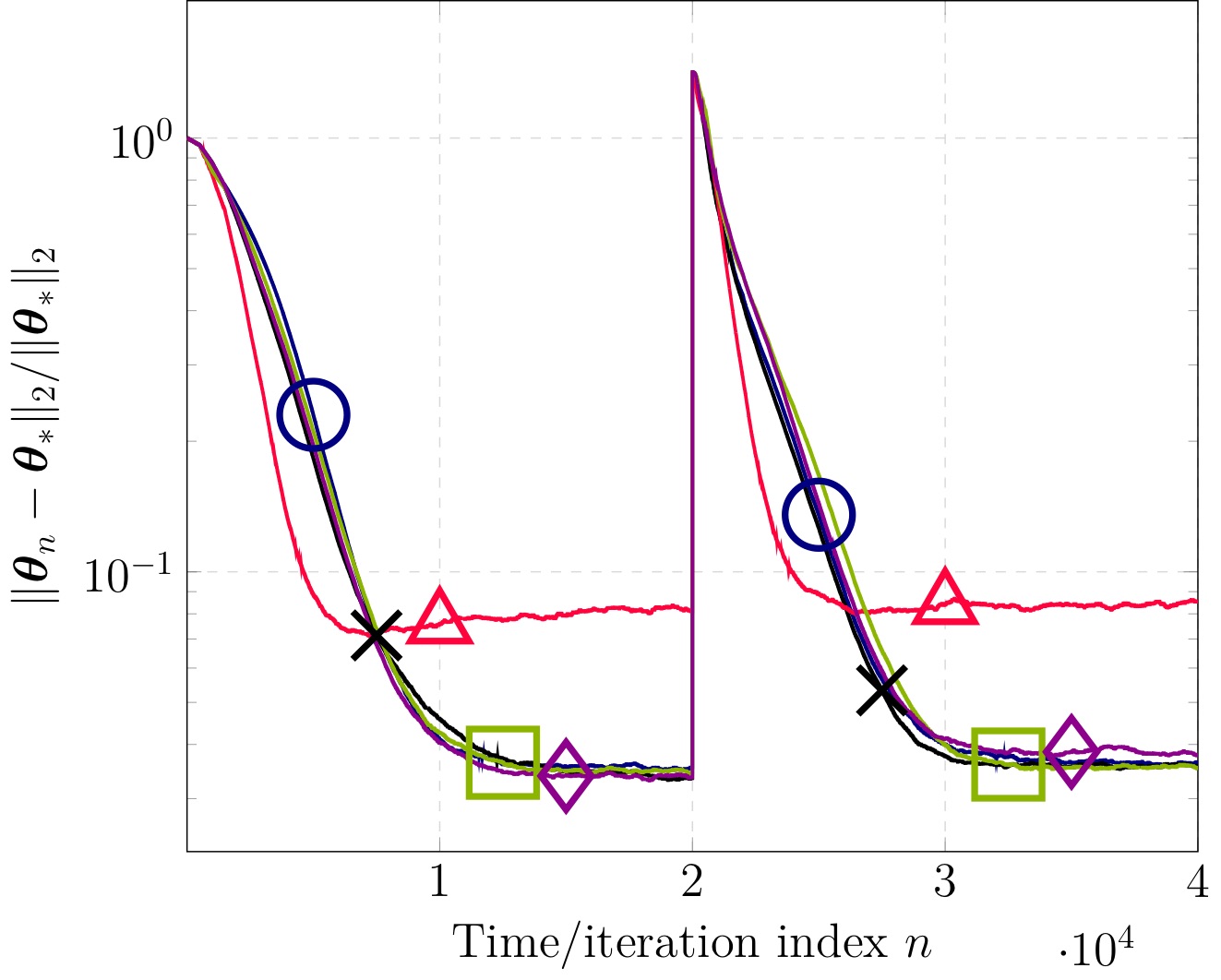}\label{2-1}}
  \subfloat[Sparse outliers]{\includegraphics[width = .24\textwidth]{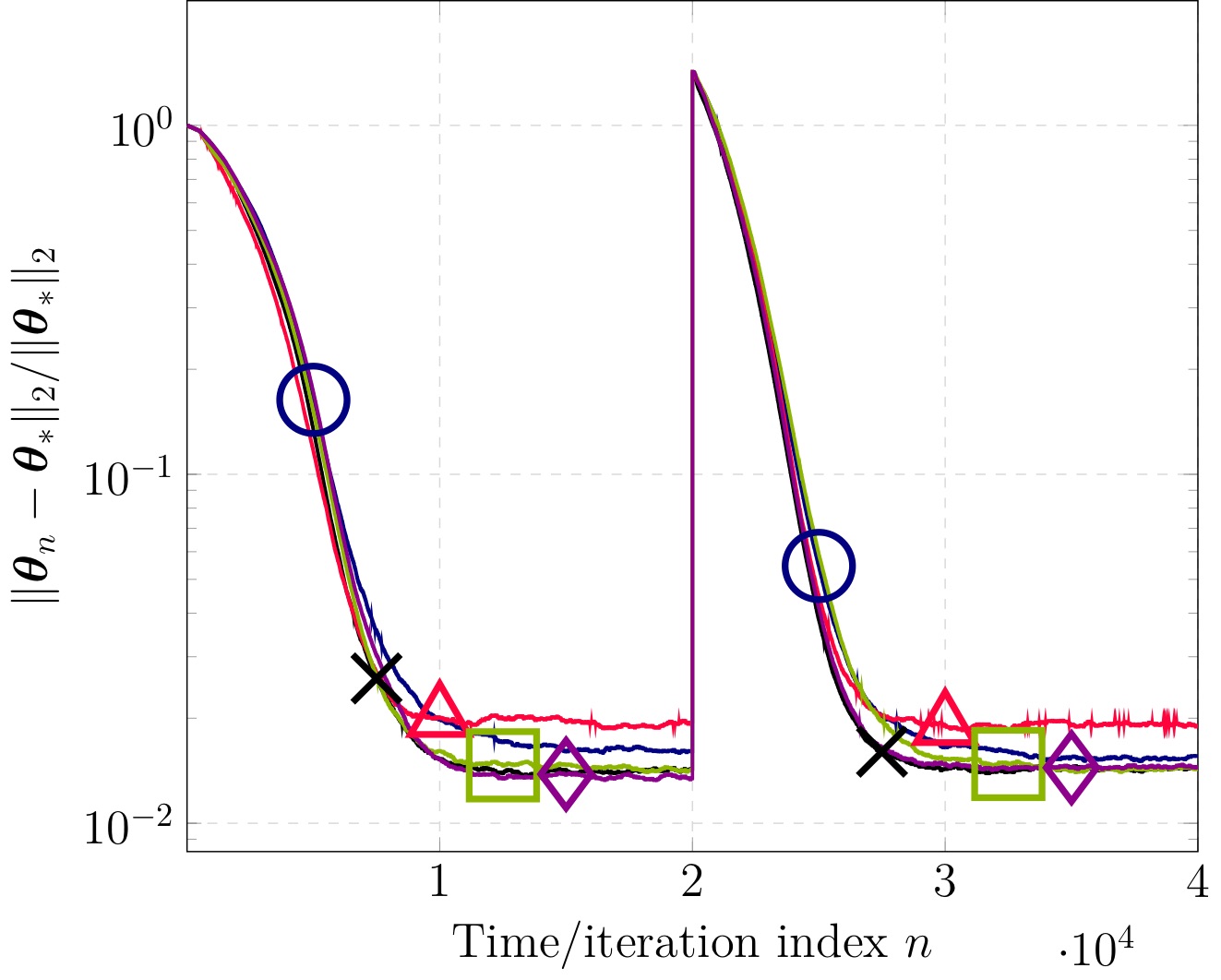}\label{2-2}}
  \caption{\cref{algo} w/ several parameters. \protect\tikz[baseline = -0.5ex]{ \protect\node[mark
      size = 3pt, color = navy, line width = .5pt ]
      {\protect\pgfuseplotmark{o}}}: $N_{\textnormal{av}}=10, \alpha=0.9$.
    \protect\tikz[baseline = -0.5ex]{ \protect\node[mark
      size = 3pt, color = black, line width = .5pt ]
      {\protect\pgfuseplotmark{x}}}: $N_{\textnormal{av}}=10, \alpha=0.75$.
    \protect\tikz[baseline = -0.5ex]{ \protect\node[mark
      size = 3pt, color = americanrose, line width = .5pt ]
      {\protect\pgfuseplotmark{triangle}}}: $\alpha=0$.
    \protect\tikz[baseline = -0.5ex]{ \protect\node[mark
      size = 3pt, color = applegreen, line width = .5pt ]
      {\protect\pgfuseplotmark{square}}}: $N_{\textnormal{av}}=1, \alpha=0.9$.
    \protect\tikz[baseline = -0.5ex]{ \protect\node[mark
      size = 3pt, color = darkmagenta, line width = .5pt ]
      {\protect\pgfuseplotmark{diamond}}}: $N_{\textnormal{av}}=1,
    \alpha=0.75$.}\label{fig:vs.params}
\end{figure}

\clearpage

\footnotesize
\bibliographystyle{ieeetr}
\bibliography{refs}

\end{document}